\documentclass{article}

\usepackage{microtype}
\usepackage{graphicx}
\usepackage{subcaption}
\usepackage{booktabs} 

\usepackage{hyperref}

\usepackage[preprint]{icml2026}

\usepackage{amsmath}
\usepackage{amssymb}
\usepackage{mathtools}
\usepackage{amsthm}

\usepackage{bm}
\usepackage{enumerate}
\usepackage{algorithm}
\usepackage{algorithmic}
\usepackage{longtable}
\usepackage[most]{tcolorbox}

\renewcommand{\algorithmicrequire}{\textbf{Input:}}
\renewcommand{\algorithmicensure}{\textbf{Output:}}

\usepackage[capitalize,noabbrev]{cleveref}

\theoremstyle{plain}
\newtheorem{theorem}{Theorem}

\theoremstyle{definition}

\theoremstyle{remark}

\usepackage[textsize=tiny]{todonotes}

\icmltitlerunning{Reinforcement Learning with Conditional Expectation Reward}

\begin{document}

\twocolumn[
  \icmltitle{Reinforcement Learning with Conditional Expectation Reward}



  \icmlsetsymbol{equal}{*}
  \begin{icmlauthorlist}
    \icmlauthor{Changyi Xiao}{yyy}
    \icmlauthor{Caijun Xu}{yyy}
    \icmlauthor{Yixin Cao}{yyy}
  \end{icmlauthorlist}

  \icmlaffiliation{yyy}{Fudan University}

  \icmlcorrespondingauthor{Yixin Cao}{caoyixin2011@gmail.com}

  \icmlkeywords{Machine Learning, ICML}

  \vskip 0.3in
]



\printAffiliationsAndNotice{}  

\begin{abstract}
Reinforcement Learning with Verifiable Rewards (RLVR) has proven effective in enhancing the reasoning capabilities of large language models, particularly in domains such as mathematics where reliable rule-based verifiers can be constructed. However, the reliance on handcrafted, domain-specific verification rules substantially limits the applicability of RLVR to general reasoning domains with free-form answers, where valid answers often exhibit significant variability, making it difficult to establish complete and accurate rules. To address this limitation, we propose Conditional Expectation Reward (CER), which leverages the large language model itself as an implicit verifier, and is therefore applicable to general domains and eliminates the need for external verifiers or auxiliary models. CER is defined as the expected likelihood of generating the reference answer conditioned on the generated answer. In contrast to rule-based verifiers that yield binary feedback, CER provides a soft, graded reward signal that reflects varying degrees of correctness, making it better suited to tasks where answers vary in correctness. Experimental results demonstrate that CER is effective across a wide range of reasoning tasks, spanning both mathematical and general domains, indicating that CER serves as a flexible and general verification mechanism. The code is available at \url{https://github.com/changyi7231/CER}.
\end{abstract}

\section{Introduction}
Reinforcement Learning with Verifiable Rewards (RLVR) has demonstrated strong effectiveness in incentivizing the reasoning capabilities of large language models, which relies on a verifier to provide accurate and checkable reward signals during learning \citep{zhou2025reinforcing}. Such a verifier evaluates the correctness of a generated answer with respect to a given reference answer, and is typically implemented using carefully designed, domain-specific rules that enable deterministic verification \citep{guo2025deepseek,he2025skywork}. RLVR is particularly useful in mathematical reasoning tasks \citep{guo2025deepseek,team2025kimi}, where answers admit canonical or easily normalized representations, allowing correctness to be verified reliably through exact matching or symbolic equivalence checks \citep{math_verify}.

However, RLVR remains difficult to extend to broader reasoning domains such as physics, chemistry, finance, and other domains with open-form answers \citep{ma2025general,zhou2025reinforcing,liu2025nover}. In these domains, valid answers often exhibit diverse surface forms and substantial semantic variation, making it challenging to specify exhaustive verification rules. Consequently, constructing reliable verifiers becomes costly or even infeasible, which substantially constrains the applicability of RLVR to narrowly scoped tasks with well-defined answer spaces.

Moreover, rule-based verifiers typically provide binary feedback, assigning rewards only to strictly equivalent answers while treating all other answers as equally incorrect. As a result, they are unable to assign positive rewards to answers that are partially correct, thereby providing limited learning signals during optimization.

To address these issues, we propose Conditional Expectation Reward (CER) to extend RLVR to general reasoning domains. Rather than relying on external verification rules or auxiliary verifier models, CER uses the large language model itself as an implicit verifier. By exploiting the model’s internal consistency with respect to a reference answer, CER provides a model-intrinsic reward signal that remains applicable even when explicit verification is unavailable.

Specifically, CER measures the expected probability of generating the reference answer conditioned on the model’s generated answer, thereby producing a soft, graded reward signal to verify the generated answer. The underlying intuition is that when a generated answer is identical to, or strongly consistent with, the reference answer, the model will assign a higher conditional probability to reproducing the reference answer given that generation. 

We further show that CER can be theoretically interpreted as a smooth relaxation of the exact-match criterion, yielding reward values that reflect varying degrees of consistency between the generated and reference answers. This characteristic is particularly well suited to general reasoning domains, where partial correctness, semantic overlap, and multiple valid surface realizations are common.

We finally conduct experiments to demonstrate the effectiveness of CER. We show that CER achieves great performance on general domains, both mathematical and non-mathematical domains. These findings highlight CER as a general and robust reward mechanism for RLVR, offering a practical solution for extending reinforcement learning to a wide range of reasoning domains.

\section{Conditional Expectation Reward}
We first introduce RLVR, followed by the definition and theoretical properties of CER. We then present the empirical formulation of CER and the corresponding training objective, and finally describe an efficient procedure for computing CER.
\paragraph{RLVR}
RLVR is a reinforcement learning paradigm in which the reward signal is objectively and automatically checkable by a verifier \citep{zhang2025survey}. Specifically, for a given question $q$ with a unique reference answer $a^*=a^*(q)$, the policy model $\pi_{\theta}$ autoregressively generates a solution $s$ and a final answer $a$ to address the question. Here, the solution $s$ does not include the final answer $a$. Owing to the autoregressive factorization, the policy model $\pi_{\theta}$ satisfies
\begin{align}
    \pi_\theta(a, s|q) = \pi_\theta(s|q) \pi_\theta(a|s,q).
\end{align}
This process yields a quadruple $(q,s,a,a^*)$. An illustrative example is provided in the following box.

\begin{tcolorbox}[
  colback=blue!5!white,  
  colframe=blue!75!black, 
  arc=4pt,               
  boxrule=0.6pt,          
  left=6pt, right=6pt, top=4pt, bottom=4pt,
]
\label{box:prompt}
\textbf{Question $q$:} What is the value of $x$ in the equation $2x + 3 = 7?$\\
\textbf{Solution $s$:} Solve the equation step by step.

First, subtract \(3\) from both sides:
\[
2x + 3 = 7 \;\Rightarrow\; 2x = 4.
\]

Next, divide both sides by \(2\):
\[
x = 2.
\]

Therefore, the value of $x$ is\\
\textbf{Answer $a$:} 2\\
\textbf{Reference answer $a^*$:} 2
\end{tcolorbox}

RLVR is formulated as the optimization of the following expected reward:
\begin{align}
\mathcal{L}_{f}(\theta) = \mathbb{E}_{q\sim \mathcal{D},(s,a)\sim \pi_\theta(\cdot|q)} [f(a,a^*(q))],
\end{align}
where $\mathcal{D}$ is the distribution of $q$, and $f(a, a^{*}(q))$ is a reward that evaluates the correctness of the generated answer $a$ with respect to the reference answer $a^*$ associated with question $q$, which is computed by a verifier $f(\cdot,\cdot)$. In practice, such a verifier is often implemented as a set of carefully designed rules \citep{math_verify}. Rule-based verifiers are particularly effective in domains such as mathematics and code generation, where answers admit unambiguous representations and equivalence can be precisely defined. 

However, extending rule-based verification to general reasoning domains remains challenging \citep{ma2025general,zhou2025reinforcing}. In these domains, valid answers are often free-form and exhibit substantial variation. Consequently, it is difficult to design a rule-based verifier that is both complete and accurate, which limits the direct applicability of RLVR beyond domains with well-structured and formally verifiable answer spaces.

We illustrate this limitation with a concrete example. Consider the following question, for which multiple answers may be semantically equivalent despite differing in surface form. A typical rule-based verifier would only treat $a_1$ as correct, while assigning zero reward to other valid answers such as $a_2$ and $a_3$. This behavior collapses semantically correct but lexically different answers into the same category as incorrect ones, leading to overly sparse and noisy reward signals. Such rigid verification discourages exploration of diverse yet correct answers and hampers effective learning in general reasoning settings.

\begin{tcolorbox}[
  colback=blue!5!white,
  colframe=blue!75!black, 
  arc=4pt,               
  boxrule=0.6pt,          
  left=6pt, right=6pt, top=4pt, bottom=4pt,
]
\label{box:prompt}
\textbf{Question $q$:} Is quantum physics deterministic?\\
\textbf{Reference answer $a^*$:} No\\
\textbf{Answer $a_1$:} No\\
\textbf{Answer $a_2$:} Quantum physics is not deterministic.\\
\textbf{Answer $a_3$:} No, quantum physics is not deterministic; it is probabilistic.
\end{tcolorbox}

\paragraph{Definition}
To generalize RLVR to general domains, we propose the CER, which leverages the policy model itself as an implicit verifier, without relying on external verifiers or models. Instead of explicitly checking answer correctness, CER evaluates the internal consistency of the policy model with respect to a reference answer, thereby enabling applicability to general domains.

For a quadruple $(q,s,a,a^*)$, we define CER as:
\begin{align}
\label{equation:3}
\rho(a, a^*) :=& \mathbb{E}_{s'\sim \pi_\theta(\cdot|q)} \big[ \pi_\theta(a^*|s',q) \, \big| \, A=a \big]\notag\\
=& \mathbb{E}_{s'\sim \pi_\theta(\cdot|q,a)} \big[ \pi_\theta(a^*|s',q) \big].
\end{align}
CER measures the expected likelihood of generating the reference answer $a^*$ given the condition that the model has generated an answer $a$. The intuition is that if the generated answer $a$ is identical to, or strongly correlated with, the reference answer $a^*$, then the policy model should assign a higher conditional probability to generating $a^*$ after having produced $a$.

By optimizing the policy model with respect to CER, the model is encouraged to place higher probability mass on answers that are internally consistent with the reference answer, thereby implicitly guiding the generation process toward $a^*$ without requiring explicit verification.

We illustrate CER with an example from training. For the following given question, the model generates three distinct answers, $\{14, 13, 94\}$. The answer $14$ receives the highest CER value (0.752), as it exactly matches the reference answer. The answer $13$ attains the second-highest CER value (0.313), reflecting its numerical proximity to the reference answer $14$. In contrast, the answer $94$ receives a near-zero CER value (0.00004), as it is numerically distant from the reference answer.

\begin{tcolorbox}[
  colback=blue!5!white,
  colframe=blue!75!black, 
  arc=4pt,               
  boxrule=0.6pt,          
  left=6pt, right=6pt, top=4pt, bottom=4pt,
]
\label{box:prompt}
\textbf{Question $q$:} How many positive multiples of $7$ that are less than $1000$ end with the digit $3$?\\
\textbf{Reference answer $a^*$:} 14\\
\textbf{Generated Answers and CER $(a,\rho(a,a^*))$:} \\$\{(14,0.752),(13,0.313),(94, 0.00004)\}$
\end{tcolorbox}

\paragraph{Properties}
We summarize several fundamental properties of CER, which demonstrate the effectiveness of CER.
\begin{itemize}
    \item \textbf{Boundedness.}
    \[
    0 \le \rho(a,a^*) \le 1.
    \]
    Since $\pi_\theta(a^*|s',q) \in [0,1]$ for all $(q,s')$, the weighted
    sum of these probabilities also lies in $[0,1]$. As a result, CER provides a bounded and well-scaled reward signal that is suitable for stable optimization.
    
    \item \textbf{Minimum.}
    \[
    \begin{aligned}
    \rho(a,a^*) = 0
    \quad\Longleftrightarrow\quad
    \pi_\theta(a^* | s',q) = 0\\
    \quad\text{for all } (q,s') \text{ with } 
    \pi_\theta(s' | q,a) > 0.
    \end{aligned}
    \]
    $\rho(a,a^*)$ is zero exactly when the policy assigns probability~$0$ to $a^*$ on any $(q,s')$ that appear in the conditional distribution defined by $A=a$. In this case, once the policy has generated $a$, it is impossible for the model to regenerate $a^*$ under the same posterior distribution over $(q,s')$.

    \item \textbf{Maximum.}
    \[
    \begin{aligned}
    \rho(a,a^*) = 1
    \quad\Longleftrightarrow\quad
    \pi_\theta(a^* | s',q) = 1\\
    \quad\text{for all } (q,s') \text{ with } 
    \pi_\theta(s' | q,a) > 0.
    \end{aligned}
    \]
    $\rho(a,a^*)$ reaches its maximum value only when the policy
    assigns probability~$1$ to the reference answer $a^*$ for every
    $(q,s')$ that can occur under the conditional distribution defined by $A=a$. In this case, the policy cannot produce any alternative answer with positive probability. Consequently,
    \[
    \rho(a,a^*) = 1
    \quad\Longrightarrow\quad a=a^*
    \]
    
    \item \textbf{Self-Consistency.}
    \begin{theorem}[Exact-Match Case]
    If $a=a^*$, then
    \begin{align*}
        \rho(a^*, a^*)=& \mathbb{E}_{s'\sim \pi_\theta(\cdot|q)} \big[ \pi_\theta(a^*|s',q) \, \big| \, A=a^* \big]\\
        =& \mathbb{E}_{s'\sim \pi_\theta(\cdot|q,a^*)} \big[ \pi_\theta(a^*|s',q) \big]\\
    \;\ge\;&\mathbb{E}_{s\sim \pi_\theta(\cdot|q)} \big[ \pi_\theta(a^*|s,q) \big].
    \end{align*}

    with equality if and only if $\pi_\theta(a^*|s, q)$ is constant over all
    $(q,s)$ such that $\pi_\theta(s |q)>0$.
    \end{theorem}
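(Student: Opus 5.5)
We prove the inequality by rewriting the conditional expectation with Bayes' rule and recognizing the result as a second moment. Fix $q$ and set $g(s) := \pi_\theta(a^*|s,q)$, a random variable under $s\sim\pi_\theta(\cdot|q)$ with values in $[0,1]$. Write $p := \pi_\theta(a^*|q) = \sum_s \pi_\theta(s|q)\,g(s) = \mathbb{E}_{s\sim\pi_\theta(\cdot|q)}[g(s)]$, and assume $p>0$ so that conditioning on $A=a^*$ is well defined. By the autoregressive factorization $\pi_\theta(a,s|q)=\pi_\theta(s|q)\pi_\theta(a|s,q)$ and Bayes' rule,
\[
\pi_\theta(s'|q,a^*) \;=\; \frac{\pi_\theta(s'|q)\, g(s')}{p}.
\]
This is the first step, and it justifies the second equality in the statement, namely that conditioning on $A=a^*$ is the same as sampling from the posterior.

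Substituting the posterior into the definition of CER in \eqref{equation:3} gives
\[
\rho(a^*,a^*) \;=\; \sum_{s'} \frac{\pi_\theta(s'|q)\,g(s')}{p}\, g(s') \;=\; \frac{\mathbb{E}[g^2]}{\mathbb{E}[g]},
\]
where both expectations are over $s\sim\pi_\theta(\cdot|q)$. The claimed inequality $\rho(a^*,a^*)\ge \mathbb{E}[g]$ is therefore equivalent to $\mathbb{E}[g^2]\ge (\mathbb{E}[g])^2$. That is Jensen's inequality, or equivalently $\mathrm{Var}(g)\ge 0$. Explicitly,
\[
\rho(a^*,a^*) - \mathbb{E}[g] \;=\; \frac{\mathrm{Var}_{s\sim\pi_\theta(\cdot|q)}(g(s))}{\mathbb{E}[g]} \;\ge\; 0 .
\]

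For the equality case, note that the variance vanishes if and only if $g$ is almost surely constant under $\pi_\theta(\cdot|q)$. This means $\pi_\theta(a^*|s,q)$ takes the same value for every $s$ with $\pi_\theta(s|q)>0$, which is exactly the stated condition for fixed $q$.

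The only delicate point is the degenerate case $p=0$. There the conditional expectation is undefined, and I would simply exclude it as an implicit assumption, or adopt the convention that both sides are $0$. In that case $g\equiv 0$ on the support, so the convention is consistent with the equality clause. If the reader wants the statement averaged over $q\sim\mathcal{D}$, the per-$q$ inequality integrates directly. The equality condition then holds per $q$, and the "over all $(q,s)$" phrasing should be read in that sense.
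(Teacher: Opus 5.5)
Your proposal is correct and follows essentially the same route as the paper: Bayes' rule rewrites $\rho(a^*,a^*)$ as $\mathbb{E}[g^2]/\mathbb{E}[g]$, and then $\mathbb{E}[g^2]\ge(\mathbb{E}[g])^2$ gives the inequality. Your identity $\rho(a^*,a^*)-\mathbb{E}[g]=\mathrm{Var}(g)/\mathbb{E}[g]$ makes the equality case more transparent than the paper's bare assertion. On the degenerate case, the paper excludes $p=0$ by noting that $\Pr(A=a^*)>0$ because $a^*$ is a generated answer, which matches your choice to assume it away.
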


    See Appendix \ref{appendix} for the proof. This shows that conditioning on the event that the policy has generated
    $a^*$ strictly increases the posterior predictive probability of regenerating
    $a^*$, unless the policy assigns an identical likelihood to $a^*$ across all
    $(q,s)$ pairs with $\pi_\theta(s|q)>0$. Therefore, CER exhibits a self-consistency amplification effect via posterior reweighting toward higher probability assigned to the reference answer in the exact-match case.

    \item \textbf{Equivalence.}

    \begin{theorem}[Value Equivalence]
    \label{theorem:2}
    \begin{align*}
    \mathcal{L}_{\rho}(\theta) &= \mathbb{E}_{q\sim \mathcal{D},(s,a)\sim \pi_\theta(\cdot|q)} [\rho(a,a^*(q))]\\
    &= \mathbb{E}_{q\sim \mathcal{D},(s,a)\sim \pi_\theta(\cdot|q)} [\mathbb{I}(a=a^*(q))],
    \end{align*}
    i.e., the expected CER objective is equivalent in value to the exact-match objective, where $\mathbb{I}(a=a^*(q))$ indicates whether $a$ exactly matches $a^*(q)$.
    \end{theorem}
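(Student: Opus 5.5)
The plan is to fix a question $q$, prove the identity pointwise in $q$, and then take the outer expectation over $q\sim\mathcal{D}$. For fixed $q$, I will write the joint law of $(s,a)$ under the policy as $\pi_\theta(s,a|q)=\pi_\theta(s|q)\pi_\theta(a|s,q)$, using the autoregressive factorization from the RLVR paragraph. The marginal of the answer is then $\pi_\theta(a|q)=\sum_{s}\pi_\theta(s|q)\pi_\theta(a|s,q)$, and the posterior over solutions given the answer is
\[
\pi_\theta(s'|q,a)=\frac{\pi_\theta(s'|q)\pi_\theta(a|s',q)}{\pi_\theta(a|q)}
\]
for every $a$ with $\pi_\theta(a|q)>0$. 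This is exactly the conditional distribution that appears in the second line of the CER definition \eqref{equation:3}.

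Next I use the fact that $\rho(a,a^*)$ depends on $(s,a)$ only through $a$. This lets me marginalize out $s$ on the left-hand side:
\[
\mathbb{E}_{(s,a)}[\rho(a,a^*)]=\sum_{a}\pi_\theta(a|q)\,\rho(a,a^*).
\]
I will restrict the sum to answers $a$ with positive marginal probability, since only those contribute and only for those is $\rho$ well defined. Substituting the posterior, the factor $\pi_\theta(a|q)$ cancels, giving
\[
\sum_{a}\sum_{s'}\pi_\theta(s'|q)\pi_\theta(a|s',q)\pi_\theta(a^*|s',q).
\]
Now I exchange the two sums, which is justified by Tonelli because all terms are nonnegative. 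Summing over $a$ uses $\sum_a\pi_\theta(a|s',q)=1$, and what remains is $\sum_{s'}\pi_\theta(s'|q)\pi_\theta(a^*|s',q)=\pi_\theta(a^*|q)$.

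The right-hand side is immediate: $\mathbb{E}_{(s,a)}[\mathbb{I}(a=a^*)]=\Pr(A=a^*\mid q)=\pi_\theta(a^*|q)$. The two sides therefore agree for every $q$, and averaging over $q\sim\mathcal{D}$ finishes the proof.

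The only delicate step is the bookkeeping around answers with zero marginal probability, where the posterior is undefined. Those answers have probability zero under the outer expectation, so any convention for $\rho$ on them leaves the value unchanged. Because the sum over $a$ was restricted to positive-probability answers, the subsequent sum over $a$ must be restored to the full normalization $\sum_a\pi_\theta(a|s',q)=1$. This is harmless: whenever $\pi_\theta(s'|q)>0$, every $a$ with $\pi_\theta(a|s',q)>0$ also has positive marginal probability, so the excluded terms all vanish.

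Finally, I note that the equality is in value only, with $\theta$ held fixed inside $\rho$. It says nothing about gradients, since $\rho$ itself depends on $\theta$.
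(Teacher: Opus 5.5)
Your proof is correct and follows essentially the same route as the paper. The paper swaps the sums and uses $\sum_a \pi_\theta(a|q)\,\pi_\theta(s'|q,a)=\pi_\theta(s'|q)$ directly, while you expand the posterior by Bayes' rule and use $\sum_a \pi_\theta(a|s',q)=1$; these are the same identity, since $\pi_\theta(a|q)\,\pi_\theta(s'|q,a)=\pi_\theta(s'|q)\,\pi_\theta(a|s',q)$, and both arguments finish with $\sum_{s'}\pi_\theta(s'|q)\,\pi_\theta(a^*|s',q)=\pi_\theta(a^*|q)$. Your explicit treatment of answers with $\pi_\theta(a|q)=0$ and the Tonelli justification for exchanging the sums fill in details the paper leaves implicit.
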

Thus, $\rho(a, a^*)$ can be interpreted as a soft generalization of the hard exact-match reward $\mathbb{I}(a = a^*)$, while preserving the same expected value. CER yields a continuous-valued reward rather than a binary signal. This property allows CER to provide graded feedback that reflects varying degrees of consistency between the generated answer $a$ and the reference answer $a^*$, which is particularly beneficial in general domains where partial correctness or semantic similarity may be present.
\end{itemize}

In summary, these properties show that CER is a well-behaved and principled reward function. It is bounded and properly scaled, admits clear minimum and maximum conditions, and exhibits a self-consistency amplification effect when the generated answer matches the reference answer. Moreover, CER is value-equivalent in expectation to the exact-match objective while providing a continuous, graded reward signal, thereby serving as a soft generalization of exact-match rewards in general domains.

\paragraph{Empirical CER}
We next derive an empirical form of CER, as the definition in Eq.~(\ref{equation:3}) is intractable due to the summation over all possible outcomes under $\pi_\theta(a^* | s', q)$. To obtain a computable approximation, we apply Bayes’ rule and Monte Carlo sampling to derive an empirical estimator of CER:
\begin{align}
\rho(a, a^*) =& \mathbb{E}_{s'\sim \pi_\theta(\cdot|q)} \big[ \pi_\theta(a^*|s',q) \, \big| \, A=a \big] \notag\\
=& \sum_{s'} \pi_\theta(s'|q,a) \, \pi_\theta(a^*|s',q) \notag\\
= & \frac{\sum_{s'} \pi_\theta(s'|q)\pi_\theta(a|s',q)\pi_\theta(a^*|s',q)}{\sum_{s''} \pi_\theta(s''|q) \pi_\theta(a|s'',q)}\notag\\
\approx & \frac{\sum_{j=1}^{M} \pi_\theta(a|s_j,q)\, \pi_\theta(a^*|s_j,q)}{\sum_{j=1}^{M} \pi_\theta(a|s_j,q)}, s_j \sim \pi_\theta(\cdot|q).
\end{align}
The third line applies Bayes' rule to rewrite $\pi_\theta(s'|q,a)$ in terms of quantities compatible with the autoregressive factorization of large language models.

The final line further applies Monte Carlo estimation by drawing $M$ independent samples $s_j \sim \pi_\theta(\cdot|q)$ to produce an empirical estimate of CER.

The resulting empirical CER is a normalized likelihood-weighted average, where each term $\pi_\theta(a^* | s_j, q)$ is weighted by $\pi_\theta(a | s_j, q)$. This weighting captures the joint consistency of $a$ and $a^*$ under the same conditional context, so that samples for which the policy assigns high probability to both $a$ and $a^*$ contribute more to the estimator, leading to a larger value of CER.

\paragraph{Objective}
We finally define the objective based on the empirical CER. For a quadruple $(q,s,a,a^*)$, the reward function is defined as:
\begin{align}
\label{equation:5}
    &R(q,s,a,a^*)=\frac{\sum_{j=1}^{M} \pi_\theta(a|s_j,q)\pi_\theta(a^*|s_j,q)}{\sum_{j=1}^{M} \pi_\theta(a|s_j,q)},\notag\\
    & \text{where } s_j \sim \pi_\theta(\cdot|q).
\end{align}

Using this reward function, we define the training objective as
\begin{align}
\mathcal{L}_{\rho}(\theta) &= \mathbb{E}_{q\sim \mathcal{D},(s,a)\sim \pi_\theta(\cdot|q)} [\rho(a,a^*)]\notag\\
 &\approx  \mathbb{E}_{q\sim \mathcal{D},(s,a)\sim \pi_\theta(\cdot|q)} [R(q,s,a,a^*)].
\end{align}
Then the corresponding policy gradient is given by
\begin{align}
&\nabla_\theta \mathcal{L}_{\rho}(\theta) \notag\\
\approx &\mathbb{E}_{q\sim \mathcal{D},(s,a)\sim \pi_\theta(\cdot|q)} [R(q,s,a,a^*) \nabla_\theta \log \pi_\theta(a, s|q)]\notag\\
\approx &\mathbb{E}_{q\sim \mathcal{D}} [\frac{1}{N}\sum_{i=1}^{N}R(q,s_{i},a_{i},a^*) \nabla_\theta \log \pi_\theta(a_{i}, s_{i}|q)],\notag\\
 &\text{where } (s_i,a_i) \sim \pi_\theta(\cdot|q).
\end{align}

For each $q$, we sample $N$ independent $(s_i,a_i)$ from $\pi_\theta(\cdot|q)$ for estimating the  gradient. The reward $R(q,s_{i},a_{i},a^*)$ is treated as a fixed scalar with respect to $\theta$ during optimization to detach it from gradient computation for stable policy learning \citep{ziegler2019fine,ouyang2022training}.

\begin{figure*}[t]
\begin{center}
\centerline{\includegraphics[width=0.95\textwidth]{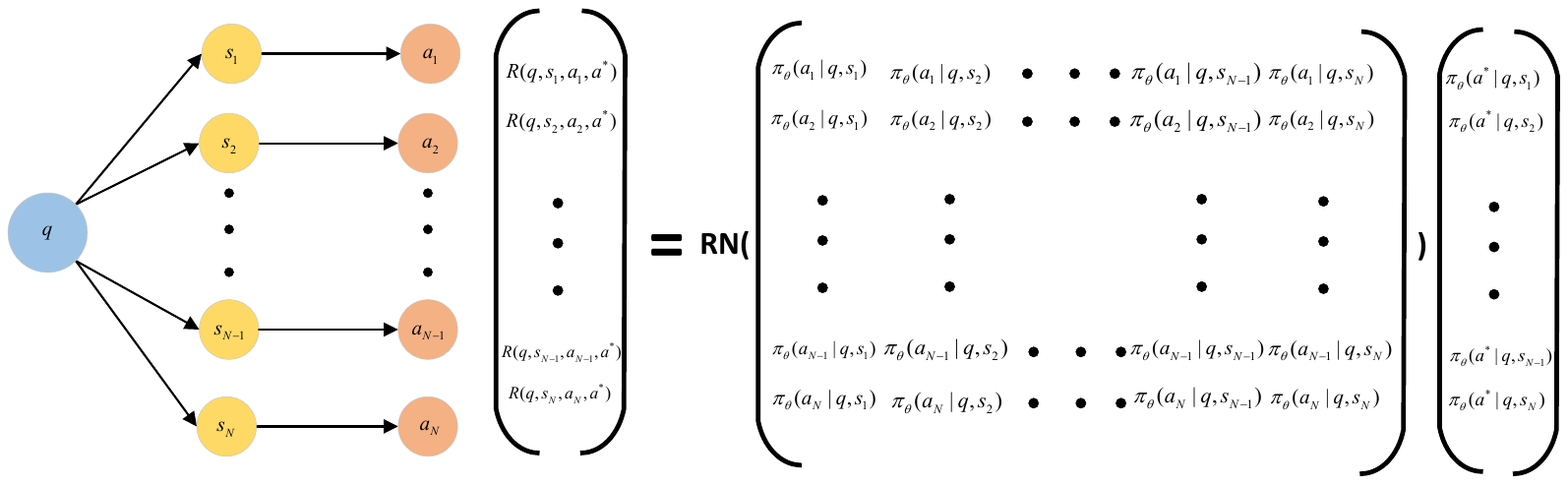}}
\caption{An illustration of CER computation, where RN($\cdot$) denotes row normalization. The left panel depicts the generation process of the quadruple $(q, s_i, a_i, a^*)$, while the right panel shows the CER computation for the quadruple, corresponding to Eq.~(\ref{equation:8}).}
\label{figure:1}
\end{center}
\end{figure*}

\paragraph{Efficiency}
We now describe an efficient procedure for computing CER by reusing samples, avoiding redundant computations and adjusting the hyperparameter. As shown in Eq.~(\ref{equation:5}), computing CER requires sampling $M$ independent solutions $\{s_j\}_{j=1}^{M}$ from $\pi_\theta(\cdot| q)$. However, CER can be seamlessly integrated into policy gradient without additional sampling. Specifically, for each question $q$, we already sample $N$ independent $\{s_i\}_{i=1}^{N}$ from $\pi_\theta(\cdot | q)$ to estimate the policy gradient. These same samples can be directly reused for reward computation by setting $\{s_j\}_{j=1}^{M} := \{s_i\}_{i=1}^{N}$. As a result, CER incurs no extra sampling cost.

To understand the computation of CER more intuitively, we further show that the CER computing can be expressed in a tensorized form. Let $M := N$ and $\{s_j\}_{j=1}^{M} := \{s_i\}_{i=1}^{N}$, and define $\bm{R} \in [0,1]^N$ with entries $\bm{R}_i = R(q, s_i, a_i, a^*)$, $\bm{W} \in [0,1]^{N \times M}$ with entries $\bm{W}_{ij} = \pi_\theta(a_i | s_j, q)$, and $\bm{P} \in [0,1]^M$ with entries $\bm{P}_j = \pi_\theta(a^* | s_j, q)$. The reward vector $\bm{R}$ can then be written as
\begin{align}
\label{equation:8}
\bm{R} = \bm{D}^{-1} \bm{W} \bm{P},
\end{align}
where $\bm{D}$ is a diagonal matrix whose entries are the row sums of $\bm{W}$. See Figure \ref{figure:1} for an illustration.

Although this approach avoids additional sampling, directly computing Eq.~(\ref{equation:8}) still requires $M(N+1)$ forward passes to compute the entries of $\bm{W}$ and $\bm{P}$. We further reduce this cost by eliminating redundant computations. In particular, if two sampled answers $a_{i_1}$ and $a_{i_2}$ are identical, then their corresponding rewards satisfy $R(q, s_{i_1}, a_{i_1}, a^*) = R(q, s_{i_2}, a_{i_2}, a^*)$,
and thus the reward only needs to be computed once for each unique answer.

Besides, the hyperparameter $M$ controls a trade-off between computational efficiency and reward estimation accuracy. Larger values of $M$ yield more accurate estimates of CER at the cost of increased computation, while smaller values improve efficiency with a loss in precision. We can adjust $M$ to achieve a balance between performance and efficiency.

\begin{table*}[!t]
\centering
\begin{minipage}{\linewidth}
    \centering
    \caption{The performance of models trained on a general-domain dataset.}
    \label{table:1}
    \begin{tabular}{lcccccccc}
        \toprule
        \textbf{Methods} & \textbf{MATH500} & \textbf{AMC23} & \textbf{AIME2024} & \textbf{AIME2025} & \textbf{MMLU-Pro} & \textbf{SuperGPQA} & \textbf{Average}\\
        \midrule
        \multicolumn{8}{c}{\textit{Qwen3‑4B‑Base}} \\
        \midrule
        Base                                    &62.6 &40.5 &10.6 & 8.1 &42.0 &21.0 &30.8\\
        Exact-Match                             &78.0 &61.4 &22.3 &20.6 &62.9 &33.5 &46.5\\
        Rule-based                              &84.5 &65.8 &21.7 &18.5 &62.3 &32.6 &47.6\\
        VeriFree                                &83.4 &62.2 &19.6 &16.7 &58.7 &29.1 &44.9\\
        General-verifier                        &84.6 &64.1 &21.9 &17.7 &63.7 &34.2 &47.7\\
        CER                                     &81.6 &\textbf{67.7} &\textbf{22.8} &\textbf{21.3} &63.8 &\textbf{35.2} &48.7\\
        Rule+CER                                &\textbf{85.6} &66.6 &22.5 &19.9 &\textbf{64.1} &\textbf{35.2} &\textbf{49.0}\\
        \midrule
        \multicolumn{8}{c}{\textit{Qwen3‑8B‑Base}}  \\
        \midrule
        Base                                   &73.9 &53.1 &14.6 &12.3 &51.9 &27.0 &38.9\\
        Exact-Match                            &82.4 &66.6 &25.4 &20.4 &66.2 &36.2 &49.5\\
        Rule-based                             &86.0 &72.0 &\textbf{26.7} &21.0 &66.6 &37.7 &51.7\\
        VeriFree                               &86.0 &61.4 &22.3 &19.6 &65.4 &35.5 &48.4\\
        General-verifier                       &84.8 &74.1 &25.0 &\textbf{21.7} &67.3 &37.7 &51.8\\
        CER                                    &\textbf{87.2} &72.3 &25.8 &20.6 &69.7 &\textbf{38.4} &52.3\\
        Rule+CER                               &85.2 &\textbf{76.4} &23.5 &20.6 &\textbf{71.0} &38.3 &\textbf{52.5}\\
        \bottomrule
    \end{tabular}
\end{minipage}

\vspace{1em}

\begin{minipage}{\linewidth}
    \centering
    \caption{The performance of models trained on a mathematical dataset.}
    \label{table:2}
    \begin{tabular}{lcccccccc}
        \toprule
        \textbf{Methods} & \textbf{MATH500} & \textbf{AMC23} & \textbf{AIME2024} & \textbf{AIME2025} & \textbf{MMLU-Pro} & \textbf{SuperGPQA} & \textbf{Average}\\
        \midrule
        \multicolumn{8}{c}{\textit{Qwen3‑4B‑Base}} \\
        \midrule
        Base                                    &62.6 &40.5 &10.6 & 8.1 &42.0 &21.0 &30.8\\
        Exact-match                             &81.2 &57.3 &17.7 &15.6 &46.7 &24.2 &40.5\\
        Rule-based                              &84.2 &63.1 &22.9 &\textbf{21.5} &\textbf{62.5} &\textbf{32.2} &47.7\\
        VeriFree                                &81.5 &62.7 &19.8 &18.1 &59.9 &30.9 &45.5\\
        General-verifier                        &83.6 &63.0 &19.8 &19.0 &58.5 &30.9 &45.7\\
        CER                                     &84.1 &63.6 &\textbf{24.8} &20.0 &60.8 &32.1 &47.6\\
        Rule+CER                                &\textbf{85.0} &\textbf{67.5} &23.3 &20.8 &61.2 &31.3 &\textbf{48.2}\\
        \midrule
        \multicolumn{8}{c}{\textit{Qwen3‑8B‑Base}}  \\
        \midrule
        Base                                    &73.9 &53.1 &14.6 &12.3 &51.9 &27.0 &38.9\\
        Exact-match                             &80.0 &63.4 &16.5 &13.5 &61.4 &33.5 &44.7\\
        Rule-based                              &86.7 &70.2 &26.3 &22.7 &\textbf{65.8} &35.1 &51.1\\
        VeriFree                                &85.0 &68.4 &22.9 &20.2 &62.3 &32.4 &48.5\\
        General-verifier                        &86.0 &69.4 &\textbf{26.7} &20.6 &64.3 &34.9 &50.3\\
        CER                                     &87.2 &70.9 &23.8 &\textbf{23.1} &64.8 &35.0 &50.8\\
        Rule+CER                                &\textbf{87.3} &\textbf{72.0} &26.5 &21.0 &65.6 &\textbf{36.0} &\textbf{51.4}\\
        \bottomrule
    \end{tabular}
\end{minipage}
\end{table*}

\section{Experiments}
We first describe the experimental settings in Section~\ref{section:3.1}. We then present the main experimental results in Section~\ref{section:3.2} to evaluate the effectiveness of the proposed method. Next, we analyze the computational efficiency of CER in Section~\ref{section:3.3}. Finally, in Section~\ref{section:3.4}, we provide a detailed visualization of the CER computation process to offer further insights into its behavior and properties.

\subsection{Settings}
\label{section:3.1}

\paragraph{Datasets}
We evaluate the performance of CER across both mathematical and general reasoning domains. Accordingly, we train models on two datasets: the mathematical dataset MATH-7.5K \citep{hendrycks2021measuring} and the general-domain dataset WebInstruct \citep{ma2025general}. For WebInstruct, we retain only non-mathematical questions at the university difficulty level to focus on general-domain beyond mathematics, yielding a dataset of 50K questions.  WebInstruct includes domains such as physics, chemistry, biology, finance and so on. This subset spans a wide range of disciplines, such as physics, chemistry and biology.

\paragraph{Evaluation}
We evaluate the models on four mathematical datasets, MATH500 \citep{lightman2023let}, AMC23 \citep{aops_amc}, AIME2024 and AIME2025 \citep{aops_aime}, and two general-domain datasets, SuperGPQA \citep{du2025supergpqa} and MMLU-Pro \cite{wang2024mmlu}. Performance is reported using the pass@1 metric. For the mathematical datasets, pass@1 is computed using a rule-based verifier \citep{math_verify}. For the general-domain datasets, which consist of multiple-choice questions, pass@1 is computed via exact matching. For each mathematical dataset, we conduct 16 evaluation runs and report the average performance.

\paragraph{Baselines}
We compare CER with several baseline verifiers. These include the exact-match verifier, which checks whether the generated answer exactly matches the reference answer; a model-based verifier, General-verifier \citep{ma2025general}, which employs an external large language model to assess answer correctness; and a perplexity-based verifier, VeriFree \citep{zhou2025reinforcing}, which uses the perplexity of the reference answer for verification, a rule-based verifier \citep{math_verify}, which verifies the correctness by utilizing handcrafted rules.

\paragraph{Hyperparameter settings}
We set the batch size of questions to $32$, the number of solutions $N$ to $16$, $M$ in Eq.~(\ref{equation:5}) to 16, the learning rate to $10^{-6}$, epoch to 1 for WebInstruct dataset and epoch to 5 for MATH-7.5K dataset. For training, we use temperature $=1.0$ and top-$p=1.0$, while for evaluation we use temperature $=0.6$, top-$p=0.95$ and top-$k=20$. The maximum question length is set to $2048$ tokens, and the maximum output length is set to $4096$ tokens for training and $8192$ for evaluation. We utilize RLOO \citep{kool2019buy} as the optimization method.

\subsection{Results}
\label{section:3.2}
\textbf{CER demonstrates strong generality across domains.}
On the general-domain training dataset (Table \ref{table:1}), CER achieves the highest average performance among all compared methods for both Qwen3-4B-Base and Qwen3-8B-Base (except the combined method Rule+CER). In particular, CER consistently outperforms exact-match rewards and the perplexity-based rewards VeriFree, and exceeds the performance of rule-based verifiers and learned verifiers General-verifier across most evaluation benchmarks. The advantage of CER is especially pronounced on general-domain evaluation datasets such as MMLU-Pro and SuperGPQA, where it achieves consistent performance gains. Notably, this advantage holds without relying on domain-specific handcrafted rules or models.

When trained on the mathematical dataset (Table \ref{table:2}), CER attains performance comparable to rule-based rewards and outperform learned verifier approaches. Despite the absence of an external verifier, CER maintains strong results across mathematical benchmarks, indicating that it does not overfit to a specific domain.

Taken together, these results suggest that CER can serve as a unified reward formulation applicable to both general-domain and mathematics-oriented reasoning tasks.

\textbf{CER substantially improves over exact-match rewards.}
As established in Theorem \ref{theorem:2}, CER can be viewed as a soft generalization of the hard exact-match reward. While exact-match provides a binary signal that only distinguishes perfectly correct answers from all others, CER assigns continuous-valued rewards that reflect partial correctness to the reference answer. Empirically, this difference translates into consistent gains over exact-match training across both datasets and model scales. The graded feedback provided by CER yields denser and more informative learning signals, which is particularly beneficial when correct answers admit surface-level variations.

\textbf{CER is complementary to rule-based rewards.}
We further investigate a simple yet effective strategy for combining CER with rule-based rewards, in which the final reward is defined as the average of the CER score and the rule-based reward. As shown in Tables~\ref{table:1} and~\ref{table:2}, the combined approach (Rule+CER) generally achieves better performance than either method used in isolation. This demonstrates that integrating them yields a more informative training signal.

On the general-domain training dataset, CER enhances rule-based methods by providing graded rewards: rule-based schemes assign positive reward only to strictly equivalent answers and treat all other outputs as equally incorrect. As a result, they fail to differentiate partially correct answers, leading to sparse and uninformative learning signals. CER alleviates this limitation by assigning non-binary rewards that better reflect answer quality.

On the mathematical dataset, rule-based rewards in turn complement CER. In this domain, rule-based methods more reliably capture mathematical equivalence, thereby correcting errors introduced by imperfect similarity estimation in CER. Overall, these results highlight the complementary strengths of CER and rule-based rewards and motivate their combined use across different domains.

\begin{figure*}[!t]
\begin{center}
\centerline{\includegraphics[width=1.0\textwidth]{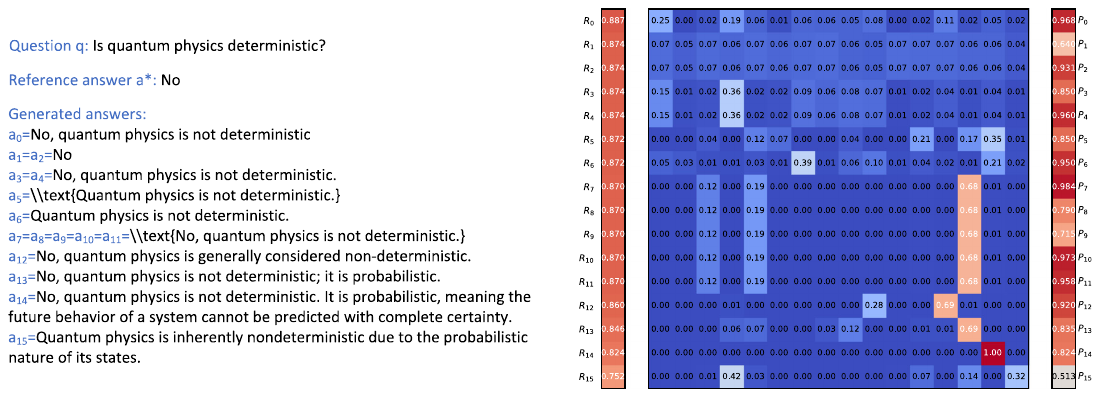}}
\caption{This figure illustrates the computation of CER as defined in Eq.~(\ref{equation:8}). The left panel shows the question, the reference answer, and the 16 generated answers. The right panel depicts the components: the reward vector $\bm{R}$ (left column), the row-normalized matrix $\bm{D}^{-1}\bm{W}$ (central block), and the reference-likelihood vector $\bm{P}$ (right column).}
\label{figure:2}
\end{center}
\end{figure*}

\subsection{Efficiency}
\label{section:3.3}
We analyze the computational efficiency of various rewards and their corresponding performance. In CER, the computational cost is governed by the hyperparameter $M$ in Eq.~(\ref{equation:5}), which specifies the number of samples used to estimate the reward. \textbf{By tuning $M$, CER enables a flexible mechanism to balance runtime efficiency and reward fidelity.}

Table~\ref{table:3} reports runtime and average performance across methods, with all experiments conducted on four NVIDIA H100 GPUs. For CER, increasing $M$ improves performance at the cost of higher runtime overhead. Empirically, CER exhibits a smooth and controllable trade-off, enabling practitioners to select $M$ that balances efficiency and performance under given computational constraints.

Exact-match rewards incur the lowest overhead but yield inferior performance. CER with smaller $M$ and rule-based rewards achieve reasonable performance while remaining efficient, whereas CER with large $M$, VeriFree, and General-verifier incur higher runtime costs due to multiple large language model queries during reward computation.

\begin{table}[!h]
\centering
\caption{The average performance across six datasets and corresponding runtime for each method.}
\label{table:3}
\begin{tabular}{ccc}
\toprule
Model & Performance & Runtime\\
\midrule
Exact-match & 46.5 & 45.2h \\
Rule & 47.6 & 54.7h \\
VeriFree  &44.9  &58.7h \\
General-verifier & 47.7 & 57.5h \\
CER (M=1) & 46.4  & 47.0h \\
CER (M=2) &  47.7 & 52.2h\\
CER (M=4) &  48.0  & 55.6h \\
CER (M=8) &  48.2  & 59.3h  \\
CER (M=16) & 48.7 & 67.4h \\
\bottomrule
\end{tabular}
\end{table}

\subsection{Visualization of CER Computing}
\label{section:3.4}
To better illustrate the computing of CER, we visualize the components involved in Eq.~(\ref{equation:8}). Recall that $\bm{R}_i$ denotes the CER associated with the quadruple $(q, s_i, a_i, a^*)$, $\bm{W}_{ij} = \pi_\theta(a_i | s_j, q)$ represents the likelihood of generating answer $a_i$ conditioned on solution $s_j$ and question $q$, and $\bm{D}$ is the diagonal matrix whose entries are the row sums of $\bm{W}$, $\bm{P}_j = \pi_\theta(a^* | s_j, q)$ measures the likelihood of producing the reference answer $a^*$ given $(s_j, q)$.

Figure~\ref{figure:2} presents a training example. The left panel shows the question $q$, the reference answer $a^*$, and the 16 generated answers $\{a_i\}$. The right panel visualizes the vectors $\bm{R}$ and $\bm{P}$, together with $\bm{D}^{-1}\bm{W}$. Specifically, the left column corresponds to the reward vector $\bm{R}$, the central block depicts the normalized matrix $\bm{D}^{-1}\bm{W}$ with each row summing to one, and the right column corresponds to the vector $\bm{P}$.

Several observations can be drawn from this visualization. \textbf{First, CER effectively captures surface-level variation and semantic similarity among answers, which is particularly important for questions with free-form answers.} In this example, the 16 generated answers contain 10 unique surface forms that nevertheless share similar semantics, such as ``No, quantum physics is generally considered non-deterministic.'' and ``No, quantum physics is not deterministic.'' CER assigns positive rewards to all such semantically consistent answers, whereas exact-match or rule-based methods would assign positive reward only to the strictly matching answer ``No'' This demonstrates that CER provides richer and more informative reward signals for general-domain reasoning tasks.

\textbf{Second, answers that receive higher CER rewards tend to exhibit stronger alignment with solutions that also assign high likelihood to the reference answer}. Since most entries of $\bm{P}$ are relatively large, this alignment can be examined by inspecting the sparsity patterns of the normalized matrix $\bm{D}^{-1}\bm{W}$. From top to bottom, the rows of the normalized matrix become increasingly sparse, which results in smaller CER values. This trend is consistent with the formulation in Eq.~(\ref{equation:5}), where reduced overlap with other solutions leads to a lower reward.

\textbf{Third, the visualization suggests that increasing the value of $M$ in Eq.~(\ref{equation:5}) can improve performance.} A larger $M$ can yield a denser and more stable normalized matrix, leading to a more accurate estimation of CER and, consequently, improved performance.

\textbf{Finally, the figure also illustrates that identical answers receive identical CER rewards.} For example, since $a_1 = a_2$, the corresponding rows in the normalized matrix are identical, which results in equal CER values for $a_1$ and $a_2$. This property reflects the consistency of CER with respect to repeated answer instances.

\section{Related Work}

\paragraph{RLVR}
RLVR \citep{lambert2024tulu,guo2025deepseek,team2025kimi} has emerged as a prominent paradigm for improving the reasoning performance of large language models. RLVR relies on rule-based verifiers to provide accurate and stable reward signals, such as the math-verify library \citep{math_verify} for mathematical reasoning tasks \citep{guo2025deepseek} and the SandboxFusion toolbox \citep{cheng2024fullstack} for code generation \citep{luo2025deepcoder,he2025skywork}. These rule-based verification methods are particularly effective in domains where answers admit unambiguous representations and deterministic equivalence rules can be readily constructed. However, their applicability is limited in general reasoning domains, where valid answers are often open-form and exhibit substantial surface variation. In contrast, CER aims to extend RLVR to such general domains.

\paragraph{General Domains}
Existing verification methods applicable to general reasoning domains can be broadly categorized into model-based verifiers and perplexity-based verifiers.

Model-based verifiers employ a fine-tuned large language model to assess the correctness of a generated answer with respect to a reference answer. For instance, Kimi-k1.5 \citep{team2025kimi} fine-tunes a model on large-scale verification data to endow it with verification capabilities. General-Verifier \citep{ma2025general} further develops a generative, model-based verifier trained specifically for chain-of-thought answer verification, enabling more nuanced and context-aware judgments.

Perplexity-based verifiers, in contrast, define reward signals based on the likelihood or perplexity of the reference answer under the large language model. VeriFree \citep{zhou2025reinforcing} combines perplexity-based rewards with variance reduction techniques to construct a  training objective. Building on this line of work, Nover \citep{liu2025nover} introduces length normalization to mitigate length bias, while RLPR \citep{yu2025rlpr} reformulates perplexity as a sum of token-level probabilities to further address sensitivity to answer length.

In contrast to both model-based and perplexity-based approaches, CER leverages self-consistency between generated answers and the reference answer to produce soft, graded, and model-intrinsic reward signals. This design enables reliable feedback without requiring additional verifier models or handcrafted rules, making CER applicable across a wide range of general reasoning domains.

\section{Conclusion}
In this paper, we propose CER as a general framework for extending RLVR beyond domains that rely on strict, rule-based verification. By leveraging the large language model itself as an implicit verifier, CER produces soft, graded reward signals that reflect partial correctness and semantic consistency, thereby overcoming the limitations of binary rule-based feedback. Our theoretical analysis shows that CER can be viewed as a smooth relaxation of exact-match evaluation, providing a principled connection to conventional verifiable rewards. Empirically, we demonstrate that CER is effective across both mathematical and general-domain reasoning tasks. Together, these results indicate that CER offers a flexible and broadly applicable mechanism for guiding reinforcement learning in large language models, enabling more general and robust reasoning capabilities.

\section*{Impact Statement}
This paper seeks to advance the state of machine learning by introducing new insights and techniques. Progress in this field can enable improvements across many domains, but the work presented here is foundational. Consequently, no specific positive or negative impacts are uniquely attributable to this work.


\bibliography{example_paper}
\bibliographystyle{icml2026}

\newpage
\appendix
\onecolumn

\setcounter{theorem}{0}
\section{Appendix}
\label{appendix}
\subsection{Proof}

\begin{theorem}[Exact-Match Case]
    If $a=a^*$, then
    \begin{align*}
        \rho(a^*, a^*)=& \mathbb{E}_{s'\sim \pi_\theta(\cdot|q)} \big[ \pi_\theta(a^*|s',q) \, \big| \, A=a^* \big]\\
        =& \mathbb{E}_{s'\sim \pi_\theta(\cdot|q,a^*)} \big[ \pi_\theta(a^*|s',q) \big]\\
    \;\ge\;&\mathbb{E}_{s\sim \pi_\theta(\cdot|q)} \big[ \pi_\theta(a^*|s,q) \big].
    \end{align*}

    with equality if and only if $\pi_\theta(a^* | s, q)$ is constant over all
    $(q,s)$ such that $\pi_\theta(s | q)>0$.
\end{theorem}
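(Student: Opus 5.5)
Fix $q$ and write $p(s) := \pi_\theta(a^*|s,q)$ and $\mu(s) := \pi_\theta(s|q)$. Let $\bar p := \sum_s \mu(s)p(s) = \mathbb{E}_{s\sim\pi_\theta(\cdot|q)}[p(s)]$, which is exactly the right-hand side of the claim and also equals $\pi_\theta(a^*|q)$. The plan is to express $\rho(a^*,a^*)$ as a second moment of $p$ divided by its first moment, and then apply the variance inequality (Jensen for $x\mapsto x^2$).

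First, I assume $\bar p>0$, since otherwise conditioning on $A=a^*$ is undefined. The first equality in the statement is just the definition (Eq.~(\ref{equation:3})). For the second form, apply Bayes' rule exactly as in the derivation of the empirical CER, using $\pi_\theta(s'|q,a^*) = \mu(s')p(s')/\bar p$. This gives
\begin{align*}
\rho(a^*,a^*) = \frac{\sum_{s'} \mu(s')\,p(s')^2}{\sum_{s''}\mu(s'')\,p(s'')} = \frac{\mathbb{E}_\mu[p^2]}{\mathbb{E}_\mu[p]}.
\end{align*}

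Next, the inequality $\rho(a^*,a^*)\ge \bar p$ is equivalent, after multiplying by $\bar p>0$, to $\mathbb{E}_\mu[p^2]\ge (\mathbb{E}_\mu[p])^2$. This holds because
\begin{align*}
\mathbb{E}_\mu[p^2]-(\mathbb{E}_\mu[p])^2=\mathrm{Var}_\mu(p)=\sum_s \mu(s)\bigl(p(s)-\bar p\bigr)^2\ge 0 .
\end{align*}

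For the equality case, $\mathrm{Var}_\mu(p)=0$ holds if and only if $p(s)=\bar p$ for every $s$ with $\mu(s)>0$. That is, $\pi_\theta(a^*|s,q)$ is constant on the support of $\pi_\theta(\cdot|q)$, which is the stated condition since $q$ is fixed.

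The only delicate step is the degenerate case $\bar p=0$. There the conditional expectation is undefined, and I would state $\bar p>0$ explicitly as a standing assumption. Note that in this case $p\equiv 0$ on the support, so the "constant" condition holds trivially and nothing in the equality characterization is lost. If the solution space is countably infinite, the sums still converge because $0\le p\le 1$, so no extra care is needed there.
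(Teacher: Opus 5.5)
Your proposal is correct and follows the paper's proof essentially step for step: you use Bayes' rule to write $\rho(a^*,a^*)=\mathbb{E}_\mu[p^2]/\mathbb{E}_\mu[p]$, and then apply $\mathbb{E}_\mu[p^2]\ge(\mathbb{E}_\mu[p])^2$. Your variance identity $\sum_s\mu(s)(p(s)-\bar p)^2$ justifies the equality case more explicitly than the paper, which only asserts it, and your standing assumption $\bar p>0$ does the same job as the paper's remark that $\Pr(A=a^*)>0$ because $a^*$ was actually generated.
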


\begin{proof}
Since $\rho(a,a^*)$ is defined for a generated answer $a$, we always have the probability $\Pr(A=a)>0$. In particular, if $a=a^*$, then $\Pr(A=a^*)>0$, so conditioning on $A=a^*$ is well defined.

By Bayes' rule,
\[
\begin{aligned}
\mathbb{E}_{s'\sim \pi_\theta(\cdot|q)} \!\left[ \pi_\theta(a^*|s',q)| A=a^* \right]
&= \frac{\mathbb{E}_{\, s \sim \pi_\theta(\cdot | q)}\!\left[ \pi_\theta(a^* | s,q) \,\mathbb{I}(A=a^*) \right]}
{\Pr(A=a^*)} \\[4pt]
&= \frac{\mathbb{E}_{\, s \sim \pi_\theta(\cdot | q)}\!\left[ \pi_\theta(a^* | s,q) \,
\Pr(A=a^* | q,s) \right]}
{\mathbb{E}_{ s \sim \pi_\theta(\cdot | q)}\!\left[ \Pr(A=a^* | q,s) \right]} \\[4pt]
&= \frac{\mathbb{E}_{ s \sim \pi_\theta(\cdot | q)}\!\left[ \pi_\theta(a^* | s,q)^2 \right]}
{\mathbb{E}_{ s \sim \pi_\theta(\cdot | q)}\!\left[ \pi_\theta(a^* | s,q) \right]} .
\end{aligned}
\]

Hence,
\[
\rho(a^*,a^*)
= \frac{\mathbb{E}_{s \sim \pi_\theta(\cdot | q)}\!\left[ \pi_\theta(a^* | s,q)^2 \right]}
{\mathbb{E}_{s \sim \pi_\theta(\cdot | q)}\!\left[ \pi_\theta(a^* | s,q) \right]}.
\]

Since $\pi_\theta(a^*|s,q)\ge 0$, Jensen's inequality (or equivalently $\mathbb{E}[X^2]\ge \mathbb{E}[X]^2$) implies
\[
\rho(a^*,a^*)
\;\ge\;
\mathbb{E}_{s\sim \pi_\theta(\cdot|q)} \big[ \pi_\theta(a^*|s,q) \big],
\]
which proves the desired inequality.

Equality holds if and only if $\pi_\theta(a^* | s,q)$ is constant over all $(q,s)$ such that $\pi_\theta(s| q)>0$.

\end{proof}

\begin{theorem}[Value Equivalence]
\begin{align*}
    \mathcal{L}_{\rho}(\theta) &= \mathbb{E}_{q\sim \mathcal{D},(s,a)\sim \pi_\theta(\cdot|q)} [\rho(a,a^*(q))]\\
    &= \mathbb{E}_{q\sim \mathcal{D},(s,a)\sim \pi_\theta(\cdot|q)} [\mathbb{I}(a=a^*(q))],
\end{align*}
i.e., the expected CER objective is equivalent in value to the exact-match objective, where $\mathbb{I}(a=a^*(q))$ indicates whether $a$ exactly matches $a^*(q)$.
\end{theorem}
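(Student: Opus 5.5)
The plan is to fix $q$ and show the identity conditionally on $q$; integrating over $q\sim\mathcal{D}$ then gives the statement. Write $p(s)=\pi_\theta(s|q)$, $g(s)=\pi_\theta(a^*|s,q)$, and $h_a(s)=\pi_\theta(a|s,q)$. The marginal probability of answer $a$ is then $\Pr(A=a)=\sum_{s} p(s)h_a(s)$. Since $\rho(a,a^*)$ depends only on $a$ and not on $s$, the inner expectation over $(s,a)\sim\pi_\theta(\cdot|q)$ reduces to an expectation over the marginal of $A$:
\begin{align*}
\mathbb{E}_{(s,a)\sim\pi_\theta(\cdot|q)}[\rho(a,a^*)] = \sum_{a:\Pr(A=a)>0}\Pr(A=a)\,\rho(a,a^*).
\end{align*}

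Next I will substitute the Bayes form of $\rho$ already derived in the Empirical CER paragraph:
\begin{align*}
\rho(a,a^*)=\frac{\sum_{s'}p(s')h_a(s')g(s')}{\Pr(A=a)}.
\end{align*}
The factor $\Pr(A=a)$ cancels, leaving
\begin{align*}
\sum_a\sum_{s'}p(s')h_a(s')g(s').
\end{align*}
This is the tower property: $\rho(A,a^*)=\mathbb{E}[g(S')\mid A]$ with $S'$ and $A$ jointly distributed as the policy's own $(s,a)$, so its expectation equals $\mathbb{E}[g(S)]$.

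I will then swap the order of summation, which is justified because all terms are nonnegative and bounded. Using $\sum_a h_a(s')=1$ gives
\begin{align*}
\sum_{s'}p(s')g(s')=\mathbb{E}_{s\sim\pi_\theta(\cdot|q)}[\pi_\theta(a^*|s,q)].
\end{align*}
On the other side, the exact-match objective conditional on $q$ is
\begin{align*}
\mathbb{E}_{(s,a)}[\mathbb{I}(a=a^*)]=\sum_s p(s)\,\pi_\theta(a^*|s,q),
\end{align*}
which is the same quantity. Averaging over $q\sim\mathcal{D}$ completes the argument.

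The only delicate points are bookkeeping, not substance. The first is terms with $\Pr(A=a)=0$: $\rho$ is undefined there, but they carry zero weight, and their contribution to the numerator sum is zero as well, so dropping them is harmless. The second is the interchange of sums over possibly countably infinite answer and solution spaces, which Tonelli's theorem handles since everything is nonnegative. I expect no real obstacle; the main care needed is keeping straight that the conditioning variable $s'$ in $\rho$ is an independent resample from the posterior, not the $s$ that generated $a$.
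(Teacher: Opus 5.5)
Your proof is correct and is essentially the paper's argument: both reduce to the tower-property identity $\sum_a \pi_\theta(a|q)\,\pi_\theta(s'|q,a)=\pi_\theta(s'|q)$. You verify this identity through the Bayes form of $\rho$ and $\sum_a \pi_\theta(a|s',q)=1$, while the paper swaps sums and cites it directly as the law of total probability; your handling of zero-probability answers and the Tonelli justification for the interchange are careful additions the paper leaves implicit.
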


\begin{proof}
By definition,
\[
\begin{aligned}
\mathcal{L}_{\rho}(\theta)
&= \mathbb{E}_{q\sim\mathcal{D},(s,a)\sim\pi_\theta(\cdot| q)}[\rho(a,a^*(q))] \\
&= \mathbb{E}_{q\sim\mathcal{D}}\!\left[\sum_{s,a} \pi_\theta(s,a|q)\,\rho(a,a^*(q))\right] .
\end{aligned}
\]

Using the definition of $\rho$,
\[
\begin{aligned}
\mathcal{L}_{\rho}(\theta)
&= \mathbb{E}_{q\sim\mathcal{D}}\!\left[\sum_{s,a} \pi_\theta(s,a|q)
    \sum_{s'} \pi_\theta(s'|q,a)\,\pi_\theta(a^*(q)|s',q)\right] \\
&= \mathbb{E}_{q\sim\mathcal{D}}\!\left[\sum_{s'} \pi_\theta(a^*(q)|s',q)
    \sum_{s,a} \pi_\theta(s,a|q)\,\pi_\theta(s'|q,a)\right].
\end{aligned}
\]

For fixed $q$, we have
\[
\sum_{s,a} \pi_\theta(s,a|q)\,\pi_\theta(s'|q,a)
= \sum_a \pi_\theta(a|q)\,\pi_\theta(s'|q,a)
= \pi_\theta(s'|q),
\]
where the first equality marginalizes out $s$ and the second follows from the law of total probability.

Therefore,
\[
\begin{aligned}
\mathcal{L}_{\rho}(\theta)
&= \mathbb{E}_{q\sim\mathcal{D}}\!\left[\sum_{s'} \pi_\theta(s'|q)\,\pi_\theta(a^*(q)|s',q)\right] \\
&= \mathbb{E}_{q\sim\mathcal{D}}\!\left[\sum_{s',a} \pi_\theta(s',a|q)\,\mathbb{I}(a=a^*(q))\right] \\
&= \mathbb{E}_{q\sim\mathcal{D},(s,a)\sim\pi_\theta(\cdot|q)}[\mathbb{I}(a=a^*(q))].
\end{aligned}
\]

This shows that the expected CER objective is equivalent in value to the exact-match objective.
\end{proof}


\end{document}